\theoremstyle{plain}
\newtheorem{theorem}{Theorem}[section]
\theoremstyle{definition}
\newtheorem{lemma}[theorem]{Lemma}
\newtheorem{definition}[theorem]{Definition}
\theoremstyle{remark}
\newcommand{\rounding}{\circ}
\newcommand{\real}{\mathbf{R}}
\newcommand{\fp}{\mathbf{F}}
\newcommand\mydef{\mathrel{\stackrel{\makebox[0pt]{\mbox{\normalfont\tiny def}}}{=}}}
\icmltitlerunning{The Hidden Power of Pure 16-bit Floating-Point Neural Networks}
\begin{document}

\twocolumn[

\textcolor{red}{This paper has been updated and moved to \url{https://arxiv.org/abs/2305.10947}}

\icmltitle{The Hidden Power of Pure 16-bit Floating-Point Neural Networks} 

\begin{icmlauthorlist}

\icmlauthor{Juyoung Yun}{SBU}
\icmlauthor{Byungkon Kang}{SK}
\icmlauthor{Francois Rameau}{SK}
\icmlauthor{Zhoulai Fu}{SK} 

\end{icmlauthorlist}

\icmlaffiliation{SK}{Department of Computer Science, State University of New York Korea, Incheon, Republic of Korea}
\icmlaffiliation{SBU}{Department of Computer Science, Stony Brook University, NY, USA}

\icmlcorrespondingauthor{Byungkon Kang}{byungkon.kang@sunykorea.ac.kr}
\icmlcorrespondingauthor{Zhoulai Fu}{zhoulai.fu@sunykorea.ac.kr}

\vskip 0.3in
]

\printAffiliationsAndNotice{}

\begin{abstract}

Lowering the precision of neural networks from the prevalent 32-bit precision has long been considered harmful to performance, despite the gain in space	and time. Many works propose various techniques to implement half-precision neural networks, but none study \emph{pure} 16-bit settings. This paper investigates the unexpected performance gain of pure 16-bit neural networks over the 32-bit networks in classification tasks. We present extensive experimental results that favorably compare various 16-bit neural networks’ performance to those of the 32-bit models. In addition, a theoretical analysis of the efficiency of 16-bit models is provided, which is coupled with empirical evidence to back it up. Finally, we discuss situations in which low-precision training is indeed detrimental.

\end{abstract}

\section{Introduction}

Today's ubiquitous need for neural network techniques --- from autonomous vehicle driving, healthcare, and finance to general artificial intelligence and engineering --- has become a faith of fact for many. Significant computing power can be necessary for training neural networks on real-world data, which has stimulated the semiconductor industry to pursue cutting-edge chips and GPU solutions for reduced-precision floating-point arithmetic.

Reduced precision provides remarkable performance gain in speed, memory usage, and energy consumption over traditional CPU-based single and double-precision computing. Many GPUs are now powered with reduced-precision floating-point. In particular,  widely accessible NVIDIA GPUs support the IEEE-standardized half-precision, namely, the 16-bit floating-point format.

However, reduced precision alone is also known to cause accuracy loss. 
In the IEEE 16-bit format, positive numbers except subnormal ones lie between 5.96E-8 and 6.5E04, and thus a straightforward rounding or truncation of single or double precision data into half precision can cause overflow, underflow, or subnormal numbers, all of which affects numerical accuracy.

Consider the sigmoid function, $\mathit{sigmoid}(x) = 1/(1+e^{-x})$. It overflows if $x$ is small and underflows if $x$ is large, where both overflow and underflow occur in $e^{-x}$,  and the error will then be propagated to the function's output. As a quick experiment, one can set a vector $x$ to be all the 16-bit floating-point numbers (there are 63487 in total) excluding $\pm \infty$ and NaN, and $y$ be all the upcast 32bit float ({\tt y= np.float32(x)}), and element-wise compare  $\mathit{sigmoid} (x)$ and $\mathit{sigmoid} (y)$. They have a relative error of  4.85E-02 and an absolute error of  7.09E-05 on average.

Errors of such scale can be consequential for general scientific computing, e.g., causing the failure of the  Patriot missile system \cite{patriot}. The question  that would naturally arise for an ML practitioner would be:
Are floating-point errors of such scale too significant for a reduced precision model in machine learning to make the right prediction? 

In the machine learning community, many believe that "deep learning models ... are  "very tolerant of reduced-precision computations" \cite{9063049}.   ML researchers have actively investigated a wide range of techniques that  
lower the precision of  the floating point numbers used in neural networks but still maintain accuracy. For example, \citet{mixed-prec-training}  propose a mixed precision technique,  where weight, activation, and gradients are stored in 16 bits, but weight updates are carried out on 32 bits.   
\citet{https://doi.org/10.48550/arxiv.1905.12322} uses another mixed precision technique with 32-bit floating-point  and BFloat, Google Brain's half-precision  format, in which a tensor modification method is used to zero out the lower 16 bits of the 32-bit data flow.

All these algorithms use mixed precision, where two or more precision are chosen from a small number of available precisions,  typically half (16-bit)  and single (32-bit). To the best of our knowledge, the 
\emph{pure} 16-bit neural network is rarely used as a standalone solution  in the machine learning community.
This work targets the 16-bit pure neural network and studies whether we can use it out of the box, without parameter tuning or techniques commonly used in mixed precision algorithms like loss scaling. Our finding is positive. That is,  pure 16-bit neural networks, without any floating-point 32 components, despite being  imprecise by nature, can be precise enough to handle a major application of machine learning -- the classification problem.

Our work first formalizes the intuitive concept of error tolerance and proposes a lemma that  theoretically guarantees 16-bit "locally" achieves the same classification result as 32-bit under certain conditions. Combined with our preliminary observations, we conjecture that the 16-bit and 32-bit have close results in handling classification problems. We then validate our conjecture through extensive experiments on Deep neural network (DNN) and Constitutional Neural Network (CNN) problems. Our contributions follow:

\begin{itemize}

	\item  We aim to debunk the myth that plain 16-bit models do not work
	well. We demonstrate that training neural network models in 
	pure 16 bits with no additional measures to "compensate" results in 
	competitive, if not superior, accuracy. 
	
	\item  We  offer theoretical insights on \textit{why} half precision models work well, as well as empirical evidence that supports our analysis.
	\item We perform extensive experiments comparing the performance of various \textit{pure} 16-bit neural networks against that of 32-bit and mixed precision networks and find that the 16-bit neural networks can perform as well as their 32-bit counterparts. We also identify factors that could negatively influence the success of  16-bit models. 
\end{itemize}

\section{Related Work}
\label{relwork}
Several techniques have been proposed to reduce the precision of machine learning models while maintaining their accuracy to some degree. The approach that best aligns with our work is that of lowering the precision of the floating point numbers used in those models, but other approaches involve algorithm modification, fixed point formats, and hardware acceleration.\\
\textbf{Algorithm modification}. These approaches aim to modify certain components of the main algorithm to allow low-precision work. The work by~\cite{low-prec-sgd} offers a way to address the issues associated with low precision in gradient descent by assigning different precisions to the weights and the gradients. Such a scheme is assisted by special hardware implementation to speed up this mixed-precision process.
A similar approach is taken by~\cite{low-prec-rl} in reinforcement learning. This work proposes various mechanisms to perform reinforcement learning on low precision reliably. For example, adopting numerical methods to improve the Adam optimization algorithm so that underflow can be prevented. Many of such techniques proposed are somewhat ad-hoc to the target problem but might also be useful in a more general setting.\\ 
\textbf{Fixed point formats}. A \textit{fixed point format} is another number format often used to represent real numbers, although not standard. Several other works have adopted fixed point formats to allow low precision due to the intuitive representation and high extensibility to other configurations.
To verify the issues with low precision in deep learning, the authors of~\cite{dl-limprec} investigate the behavior of deep neural networks when the precision is lowered. In this work, the empirical results show that reducing the precision of the real numbers comprising the weights of the network shows a graceful degradation of accuracy.
\cite{fxpnet} performs neural network training in fixed point numbers by quantizing a floating point number into a low-precision fixed point number during training. The idea of quantizing floating point is also used in~\cite{fp-cnn}, where the authors approach the conversion as an optimization problem. The objective of the optimization is to reduce the network's size by adopting different bit-width for each layer. On a more system-based approach, \cite{shiftry} proposes compiler support for converting floating point numbers to low-precision fixed point numbers. \cite{kbsize} also takes compiler- and language-based support for achieving low-precision numbers.\\
Although not entirely the same as \textit{real} number fixed point, integer quantization can also be considered a special fixed point format and has become a viable choice in low-precision neural network design. In~\cite{int-dnn}, floating point weights are quantized to convert them into signed integer representations. The authors discover that adopting an integral quantization technique results in a regularization-like behavior, leading to increased accuracy. Similar to this approach, work by~\cite{mixed-prec-cnn} also proposes to use integer operations to achieve high accuracy. Such integer operations effectively convert the floating point numbers into what is known as dynamic fixed point (DFP) format. The work in \cite{quant-nn} also aims to quantize the weights to integers. However, the weights remain integers only during the forward pass and become floating point numbers for back-propagation to account for minute updates. Another work~\cite{bin-connect} takes an extreme quantization approach to binarize the weights to -1 and 1. The weights remain binary during the forward and backward pass but become floating points during the weight update phase. \cite{smoothquant} adopts a post-training quantization approach to reduce the memory footprint of large-scale language models.\\
\cite{flexpoint} proposes an adaptive numerical format that retains the advantages of floating and fixed point numbers. This is achieved by having a shared exponent that gets updated during training.\\
\textbf{Mixed precision}. On the other hand, \textit{mixed precision} approaches maintain standard floating point numbers in two different precision. The first practically successful reduced-precision floating point mechanism was proposed by~\cite{mixed-prec-training}. In this work, the authors devise a scheme to perform mixed precision training by maintaining a set of master full-precision floating point weights that serves as the `original copy' of the half-precision counterparts. While this technique does result in reduced running time, the mixed-precision nature of it limits the performance gain achieved.
\cite{ulp} propose a 4-bit floating point architecture mixed with a small amount of 8-bit precision. In addition to the format, the authors devise a two-phase rounding procedure to counter the low accuracy induced by the low-precision format.\\
\textbf{Hardware support}. Lastly, we would like to point out that many of these works either implicitly or explicitly require hardware support due to the individual floating/fixed point formats. Most approaches typically use FPGAs and FPUs to implement these formats, but other works such as \cite{bitfusion} propose novel architectures tailored to addressing the bit formats of the floating point numbers. Some  previously mentioned works, such as~\cite{ulp}, also hint at the possibility of leveraging hardware assistance.

Unlike these previous works, our work focuses explicitly on the IEEE floating point format, which is the de-facto standard for general computing machinery. More precisely, we investigate the pros and cons of using a pure 16-bit IEEE floating point format in training neural networks without  external support.

\section{Background}

\paragraph{General Notation}
The real numbers and integers are denoted by $\real$ and ${\bf Z}$, respectively. Given a vector $x=(x_0,..., x_{n-1})\in \real^n$, its  infinity norm or maximum norm:, denoted by $\vert x\vert_{\infty}$ , is the maximum element in the vector, namely $\vert x \vert_{\infty}\mydef\max_i|x_i|$.

\subsection{Floating-Point Representation}

We write $\fp_{16}$ to denote the set of 16-bit floating-point numbers excluding $\pm \infty$ and NaN (Not-a-Number). Following IEEE-754 standard \cite{4610935}, each  $x \in \fp_{16}$ can be written as  
\begin{align}\label{eq:16bit}
	x = (-1)^s \times g_0.g_1...g_{10} \mbox{ }_{(2)} \times 2^e
\end{align}	 
where $s\in \{0,1\}$, $g_i \in \{0,1\} $ ($0 \leq i \leq 10$), and $e\in {\bf Z}$. We call  $s$, $g_0.g_1....g_{10}$ and $e$ the sign, the significand, and the exponent, respectively. They satisfy  $g_0 \neq 0$ and $-14 \leq e \leq 15$. The case where $g_0 =0$ and $e=-14$ is called a subnormal number.\\   

We write $\fp_{32}$ for the set of 32-bit (single-precision)  floating-point numbers excluding  $\pm \infty$ and NaN.     The following property holds: 
\begin{align}
	\fp_{16} \subset \fp_{32} \subset \real
\end{align}
Namely, a 16-bit or 32-bit floating-point number is a real,  and a 16-bit can be exactly represented as a 32-bit (by padding with zeros). Tab. \ref{tab:fp} lists the range and the precision of 16-bit and 32-bit floating-point numbers.

\begin{table}[ht]\label{tab:fp}
	\centering
	\footnotesize
	\setlength{\tabcolsep}{4pt}
	\renewcommand{\arraystretch}{1.5}
	
	\caption{Some characteristics of the 16-bit  and 32-bit floating-point formats. }
	\vskip 0.15in
	\begin{tabular}{lllc}
		
		\toprule
		Type  & Size & Range & Machine-epsilon\\
		\midrule
		Half & 16 bits & 6.55E$\pm$4 & 4.88E-04 \\
		Single & 32 bits & 3.4E$\pm$38 & 5.96E-08  \\
		\bottomrule
	\end{tabular}

\end{table}

\subsection{Floating-point Errors}
Rounding is necessary when representing real numbers that cannot be written as Eq. \ref{eq:16bit}. \emph{Rounding error} of a real $x$ at precision $p$ refers to $|x - \rounding_p(x)|$ where the rounding operation, $\rounding_p: \real \to\fp_p$, defines the nearest floating-point number of $x$. Namely, $\rounding_p(x) \mydef \mathrm{argmin}_{y \in \fp_p}  |x - y| $. \footnote{For simplicity, this definition of the rounding operation ignores the case where a tie needs to be broken.}
Rounding error is usually small, on the order of machine epsilon (Tab. \ref{tab:fp}), but it can be propagated and become more significant. For example, the floating-point code {\tt sin(0.1)} goes through three approximations. First, 0.1 is rounded to the floating-point $\rounding_p(0.1)$ for some precision $p$. Then, the rounding error is \emph{propagated} by  the floating-point code {\tt sin}. Lastly,  the  calculation output is rounded again if an exact representation is not possible.

Floating-point errors are usually measured in  \emph{absolute error} or \emph{relative error}. This paper focuses on classification problems where output numbers are probabilities between 0 and 1. Thus, we use the absolute error $|x - y|$  for quantifying the difference between two floating point numbers $x$ and $y$.

\section{Theory}
\begin{table*}[htp]\label{tab:mnist1}
	
	\footnotesize
	\setlength{\tabcolsep}{4pt}
	\renewcommand{\arraystretch}{1.5}
	
	\caption{The columns of "Floating-point errors" and "Error tolerance" refer to statistics of $\delta (M_{32}, M_{16}, x)$ and $\Gamma(M32, x)$ 	respectively, where $x$ ranges over the images of the MNIST dataset.} 
	\vskip 0.15in
	\begin{center}
		\begin{tabular}{llllllllll} 
			\toprule
			&  \multicolumn{4}{c}{Floating-point error} &  \phantom{abc}& \multicolumn{4}{c}{Error tolerance} \\
			\cmidrule{2-5} \cmidrule{7-10}
			Epochs  & Min & Max & Mean & Variance & & Min & Max & Mean & Variance \\
			\midrule		
			10 & 0.00E+00 & 1.68E-01 & 2.76E-03 & 5.62E-05 & &9.29E-05 & 1.00E+00 & 7.66E-01 & 7.58E-02 \\
			20 & 7.84E-15 & 2.07E-01 & 2.88E-03 & 8.53E-05 & &3.65E-05 & 1.00E+00 & 8.24E-01 & 6.31E-02 \\
			50 & 0.00E+00 & 3.82E-01 & 3.69E-03 & 1.97E-04 & &2.74E-06 & 1.00E+00 & 8.79E-01 & 4.72E-02 \\
			100 & 0.00E+00 & 5.64E-01 & 4.12E-03 & 3.27E-04 & &3.24E-04 & 1.00E+00 & 9.16E-01 & 3.42E-02 \\
			200 & 0.00E+00 & 6.75E-01 & 4.23E-03 & 4.76E-04 & &1.93E-04 & 1.00E+00 & 9.47E-01 & 2.19E-02 \\
			500 & 0.00E+00 & 9.35E-01 & 3.76E-03 & 6.18E-04 & &1.44E-03 & 1.00E+00 & 9.79E-01 & 7.72E-03 \\
			1000 & 0.00E+00 & 9.95E-01 & 3.14E-03 & 5.77E-04 & &1.91E-03 & 1.00E+00 & 9.91E-01 & 2.98E-03 \\
			
			\bottomrule
		\end{tabular}
		
	\end{center}
	
\end{table*}
Suppose $M_{16}$ and $M_{32}$ are 16-bit and 32-bit deep learning models trained by the same neural network architecture and hyper-parameters. By abuse of notation, we consider $M_{16}$ and $M_{32}$ as classifiers or functions that return the probability vector from the last layer  given an input $x$ (e.g., an image).

Let $x$ be an arbitrarily chosen input. Suppose $M_{32}(x)$ returns $(p_0,\cdots,p_{N-1})$, and $M_{16}(x)$ returns $(p'_0,\cdots,p'_{N-1})$. Clearly, the classification result of an input $x$ made by a classifier $M$ is given as 
\begin{align}
	\text{pred}(M, x)\mydef\mathrm{argmax}_{i}\{p_i\vert p_i\in M(x)\}.	
\end{align}
Due to the floating-point error, the classification results of $M_{32}$ and $M_{16}$ on $x$ can be different. To quantify this difference, we define the \textit{floating point error} as follows.
\begin{definition}
	Given the 16-bit classifer $M_{16}$, the 32-bit $M_{32}$, and an input $x$, the \emph{floating point error} between the classifiers is given as
	\begin{equation}
		\label{eq:delta}
		\delta (M_{32}, M_{16}, x)\mydef\vert M_{32}(x) - M_{16}(x)\vert_{\infty}
	\end{equation}
\end{definition}

The degree to which this difference affects the outcome is an important question we investigate in this work.
In fact, we can  show a sufficient condition (denoted by C) that guarantees the absence of difference between two classifiers.   
\begin{itemize}
	\item[{\bf (C.)}] If the difference between the largest of {$p_i$} and the second largest is greater than twice the floating point error, then the two classifiers  $M_{16}$ and $M_{32}$ have the same classification result on $x$. 
	
\end{itemize}
Illustration for condition (C): suppose $M_{32}(x)=(0.8, 0.1, 0.05, 0.05)$ for a classification problem of four labels.
Let the largest error between this probability vector and $M_{16}(x) $ be $\delta$. Then in the worst case, 0.8 can drop to $0.8 - \delta$ for the 16-bit, and the second largest probability becomes $0.1+\delta$. If $0.8-\delta > 0.1+\delta$, then $M_{16}$ and $M_{32}$ must have the same classification result on $x$, e.g., $M_{16}(x)=(0.7, 0.15, 0.1, 0.05)$. \\

Below, we formalize condition (C) following  introduction of the notion of  \textit{error tolerance}.
\begin{definition}
	The \emph{error tolerance} of a classifier $M$ with respect to an input $x$ is defined as  the gap between the largest probability and the second-largest one:
	\begin{equation}
		\Gamma(M,x)\mydef p_0 - p_1,
		\label{eq:gamma}
	\end{equation}
	where $p_0=\vert M(x))\vert_{\infty}$, and $p_1=\vert M(x)\backslash p_0\vert_{\infty}$.
\end{definition}
Here, $M_{32}(x)\backslash p_0$ refers to a vector of elements in $M_{32}(x)$ but with $p_0$ removed. The error tolerance can be thought of as quantifying the stability of the prediction. We have the following lemma corresponding to condition C.
\begin{lemma}\label{lem:twice}
	Consider a classification problem characterized by a pair $(X,Y)$ where $X$ is the space of input data, and $Y= \{0, .. N-1\}$  is the labels of classification. Suppose a learning algorithm trains a 32-bit model $M_{32}: X \to \fp_{32}^N $ and a	16-bit model $M_{16}: X \to \fp_{16}^N$ on a dataset $D\subseteq X \times Y$.
	
	We have: If  
	\begin{align} \label{eq:twice}
		\Gamma( M_{32},x) \geq 2 	\delta (M_{32}, M_{16}, x)
	\end{align}
	then $\mathrm{pred}(M_{32}, x)=\mathrm{pred}(M_{16}, x)$. 
\end{lemma}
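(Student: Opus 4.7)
The plan is to argue directly from the componentwise bound implicit in the definition of $\delta$. Fix the input $x$, and let $i^\star=\mathrm{pred}(M_{32},x)$, so $p_{i^\star}=|M_{32}(x)|_\infty$. Write $p_j$ for the $j$-th coordinate of $M_{32}(x)$ and $p'_j$ for the $j$-th coordinate of $M_{16}(x)$. Since $\delta(M_{32},M_{16},x)=|M_{32}(x)-M_{16}(x)|_\infty$, the infinity-norm definition gives $|p_j-p'_j|\le\delta$ for every coordinate $j$, and hence the two-sided bound $p_j-\delta\le p'_j\le p_j+\delta$.

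Next I would use this bound to compare the $i^\star$-coordinate of $M_{16}(x)$ against every other coordinate. For any $j\ne i^\star$, the definition of $\Gamma$ as the gap between the largest and second-largest entries of $M_{32}(x)$ yields $p_{i^\star}-p_j\ge\Gamma(M_{32},x)$. Combining this with the two-sided bound above gives
\begin{equation*}
p'_{i^\star}-p'_j\;\ge\;(p_{i^\star}-\delta)-(p_j+\delta)\;=\;(p_{i^\star}-p_j)-2\delta\;\ge\;\Gamma(M_{32},x)-2\delta(M_{32},M_{16},x)\;\ge\;0,
\end{equation*}
where the final inequality is exactly the hypothesis \eqref{eq:twice}. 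Therefore $i^\star$ attains the maximum of $M_{16}(x)$, which is the definition of $\mathrm{pred}(M_{16},x)=i^\star=\mathrm{pred}(M_{32},x)$.

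The only subtlety — and the step I would flag as a minor obstacle rather than a serious one — is the boundary case where \eqref{eq:twice} holds with equality, since then the chain above only yields $p'_{i^\star}\ge p'_j$ and ties become possible in $M_{16}(x)$. This is resolved by noting that $\mathrm{argmax}$ is assumed throughout the paper to return a well-defined index (e.g.\ the smallest index achieving the maximum), in which case $i^\star$ is still a valid argmax for $M_{16}(x)$ and the conclusion follows; alternatively, one can strengthen the hypothesis to a strict inequality. No other nontrivial step is required: the lemma is essentially a triangle-style inequality wrapped around the definitions of $\delta$ and $\Gamma$.
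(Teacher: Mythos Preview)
Your proof is correct and follows essentially the same approach as the paper's own argument: both use the componentwise bound $|p_j-p'_j|\le\delta$ to sandwich $p'_{i^\star}-p'_j$ between $(p_{i^\star}-p_j)-2\delta$ and then invoke the hypothesis to conclude this is nonnegative. The paper's version is slightly terser (chaining $p'_0\ge p_0-\delta\ge p_i+\delta\ge p'_i$) and, like yours, only obtains a non-strict inequality in the boundary case; your explicit flagging of the tie issue is a welcome addition the paper glosses over.
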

\begin{proof}
	Let $M_{32}(x)$ be $(p_0, .... p_{N-1})$.  Without loss of generality we assume $p_0$ is the largest one in $\{p_i\}$ ($0\leq i \leq N-1$). 
	We denote $\delta(M_{32}, M_{16}, x)$ by $\delta$ hereafter.  Following Eq. \ref{eq:twice}, 	we have 
	\begin{align} \label{eq:gammabis}
		\forall i\in \{1,..., N-1\}, p_0  - p_i \geq  2  \delta.
	\end{align}
	Let $M_{16}(x)$ be $(p'_0, .... p'_{N-1})$. Then for each  $i\in \{1,..., N-1\}$, we have
	\begin{align*}
		p_0' &\geq p_0 - \delta  && \text{By Eq. \ref{eq:delta} and Def. of }  p_0, p_0'  \\ \nonumber
		&\geq p_i+  \delta && \text{By Eq. \ref{eq:gamma}  and  Eq. \ref{eq:gammabis}}\\ \nonumber
		&\geq p_i' &&\text{By Eq. \ref{eq:delta}}  \nonumber
	\end{align*}
	Thus $p_0'$ remains the largest in the elements of $M_{16}(x)$. 
\end{proof}         

\begin{table*}[htp]\label{tab:mnist2}
	\centering
	\footnotesize
	\setlength{\tabcolsep}{3pt}
	\renewcommand{\arraystretch}{1.5}
	
	\caption{Comparing accuracy and loss results between  32-bit and 16-bit neural networks on the MNIST dataset.}
	\vskip 0.15in
	\begin{center}
		
		\begin{tabular}{l  lllllllllll} 
			
			\toprule
			
			& \multicolumn{2}{c}{Train accuracy} &\phantom{abc} &  \multicolumn{2}{c}{Test accuracy}  &\phantom{abc} &  \multicolumn{2}{c}{Train loss}&\phantom{abc}  &   \multicolumn{2}{c}{Test loss} \\
			\cmidrule{2-3} \cmidrule{5-6}	 			\cmidrule{8-9} \cmidrule{11-12}
			Epochs & 32-bit & 16-bit && 32-bit & 16-bit && 32-bit & 16-bit && 32-bit & 16-bit \\

			\midrule	
			
			10 & 90.3\% & 90.0\% && 90.8\% & 91.0\% && 3.49E-01 & 3.69E-01 && 3.23E-01 & 3.39E-01 \\
			20 & 92.2\% & 92.3\% && 92.8\% & 92.8\% && 2.71E-01 & 2.94E-01 && 2.57E-01 & 2.74E-01 \\
			50 & 94.9\% & 95.3\% && 94.9\% & 94.7\% && 1.81E-01 & 2.10E-01 && 1.80E-01 & 2.04E-01 \\
			100 & 96.7\% & 96.4\% && 96.3\% & 95.8\% && 1.16E-01 & 1.49E-01 && 1.28E-01 & 1.52E-01 \\
			200 & 98.4\% & 97.3\% && 97.4\% & 97.3\% && 6.05E-02 & 9.32E-02 && 8.97E-02 & 1.10E-01 \\
			500 & 99.8\% & 99.1\% && 97.7\% & 98.1\% && 1.38E-02 & 4.00E-02 && 7.87E-02 & 8.54E-02 \\
			1000 & 100.0\% & 99.8\% && 97.8\% & 98.1\% && 2.97E-03 & 2.04E-02 && 9.02E-02 & 8.29E-02 \\
			
			\bottomrule
		\end{tabular}
		
	\end{center}

\end{table*}
Below we illustrate Lemma~\ref{lem:twice} through a simple neural network trained on MNIST. Our 32-bit implementation has three {\tt Dense} layers followed by a softmax layer at the end. Our  16-bit implementation uses the same architecture, except all floating-point operations are performed on 16-bit. \\
Table \ref{tab:mnist1} shows our results of error tolerance $\Gamma$ and floating-point error $\delta$. Observe that the mean floating-point error is of the magnitude of 1E-3 with a variance of 1E-5 or 1E-4; the error tolerance is 1E-1 with a variance of 1E-2. Thus, one can argue that Eq. \ref{eq:twice}, namely,  $\Gamma > 2\delta$,  holds for most data in MNIST. The table also shows that floating-point errors can be larger than the tolerance in some corner cases. Thus, we expect our 16-bit and 32-bit implementations to have close but different accuracy results. \\
Results at training and testing are presented. Table \ref{tab:mnist2} shows the accuracy and loss results in MNIST comparing 16-bit and 32-bit implementations. We can see consistently that the 16-bit results have accuracy close to those of the 32-bit models, sometimes even better. This result motivates us to study whether the 16-bit model is similar to the 32-bit model for more complex neural networks. 

In theory, if 80\% of data in a dataset satisfy Eq. \ref{eq:twice},  Lemma \ref{lem:twice} tells us that the 32-bit and 16-bit models will have at least 80\% of classification results being the same. The main challenge here is that we cannot determine if Eq. \ref{eq:twice} always holds or the percentage of data that satisfies it. We believe that for complex neural networks, most data meet Eq. \ref{eq:twice}. 
This is because the loss function for the classification problem is a cross-entropy in the form of $-\Sigma\log(p_i)$, which should guide $p_i$ toward $1$ during training and in turn, causes a large error tolerance $\Gamma$ compared to relatively small $\delta$. In fact,
Table~\ref{tab:mnist1} shows that the floating point errors ($\delta$'s) are nearly two orders of magnitude smaller than the $\Gamma$'s. Although the gap might close over the epochs, the difference remains sufficiently large to satisfy the condition of the lemma.\\  
With this theoretical development and  observations, we propose the following conjecture. 
\begin{mdframed}[leftmargin=10pt,rightmargin=10pt]
	The accuracy  of a 16-bit neural network for classification problems, in the absence of significant errors involving floating-point overflow/underflow,  will be close to that of a 32-bit neural network.
\end{mdframed}
We anticipate a situation where floating-point errors can become significant due to overflow or underflow since 16-bit floating-point is known to have a smaller range (Table \ref{tab:fp}). The conjecture may be surprising, so we devote our next section to in-depth validation.

\section{Experiments}
\label{expr}

We aim to compare the performance of 16-bit operations to 32-bit operations in deep neural network (DNN)\footnote{While DNNs subsume CNNs,  we use the term DNN to refer to fully-connected, non-convolutional neural networks.} and convolutional neural network (CNN) models.
we experiment over three CNN models, including AlexNet~\cite{alexnet}, VGG16~\cite{vgg}, and ResNet-34~\cite{resnet}. 
We will see how the different precision settings affect the computational time and accuracy of the models. Unlike case studies in the previous section, we use 100 epochs for all experiments and  gradually increase the batch size  from 64 to 384 to see how it affects 16-bit training.
All random seeds used in this study's experiments are fixed to facilitate the comparison. The experiments were conducted on NVIDIA’s RTX3080 Laptop GPU.

Table~\ref{tab:summary} gives the result most representative of our work.  A more detailed description and analysis of these results will follow in the subsequent subsections.

\begin{table}[hbt!]
	\centering
	\footnotesize

	\setlength{\tabcolsep}{3pt}
	\renewcommand{\arraystretch}{1.5}

	\caption{Summary of the time and accuracy performances of the three CNNs.}
	\vskip 0.15in
	\begin{tabular}{clrr}
		\toprule
		Model & Results & FP32 & FP16 \\
		\hline
		AlexNet & Time & 381s & 270s\\
		& Accuracy & 68.9\% & 69.5\% \\
		\hline
		VGG16   & Time & 1445s & 812s\\
		& Accuracy & 82.9\% & 84.3\% \\
		\hline
		ResNet-34 & Time & 1914s & 1058s\\
		& Accuracy & 76.6\% & 76.1\% \\
		\bottomrule
	\end{tabular}
	\label{tab:summary}
	\vskip -0.1in
\end{table}

\begin{figure*}[hbt!]
	\centering
	\caption{\small Top-1 accuracy (top row) and computational time (bottom row) on MNIST Classification using DNN}
	\vskip 0.15in
	\includegraphics[width=\linewidth]{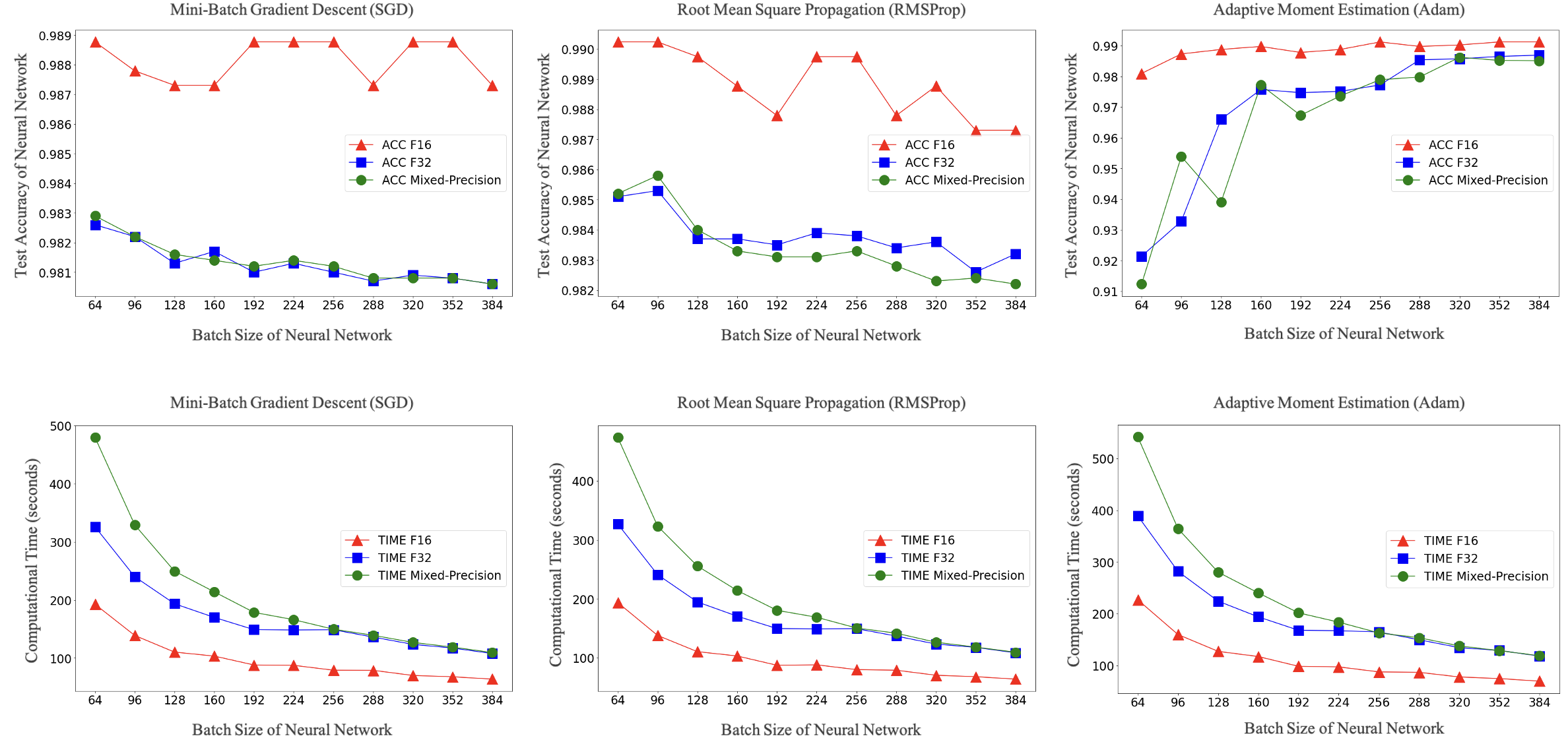}
	\label{fig:opt}
	\vskip -0.15in
\end{figure*} 

\begin{figure*}[hbt!]
	\centering
	\caption{\small Top-1 Accuracy, Top-2 Accuracy, and Computational Time for Cifar-10 Classification}
	\vskip 0.15in
	\includegraphics[width=2.0\columnwidth]{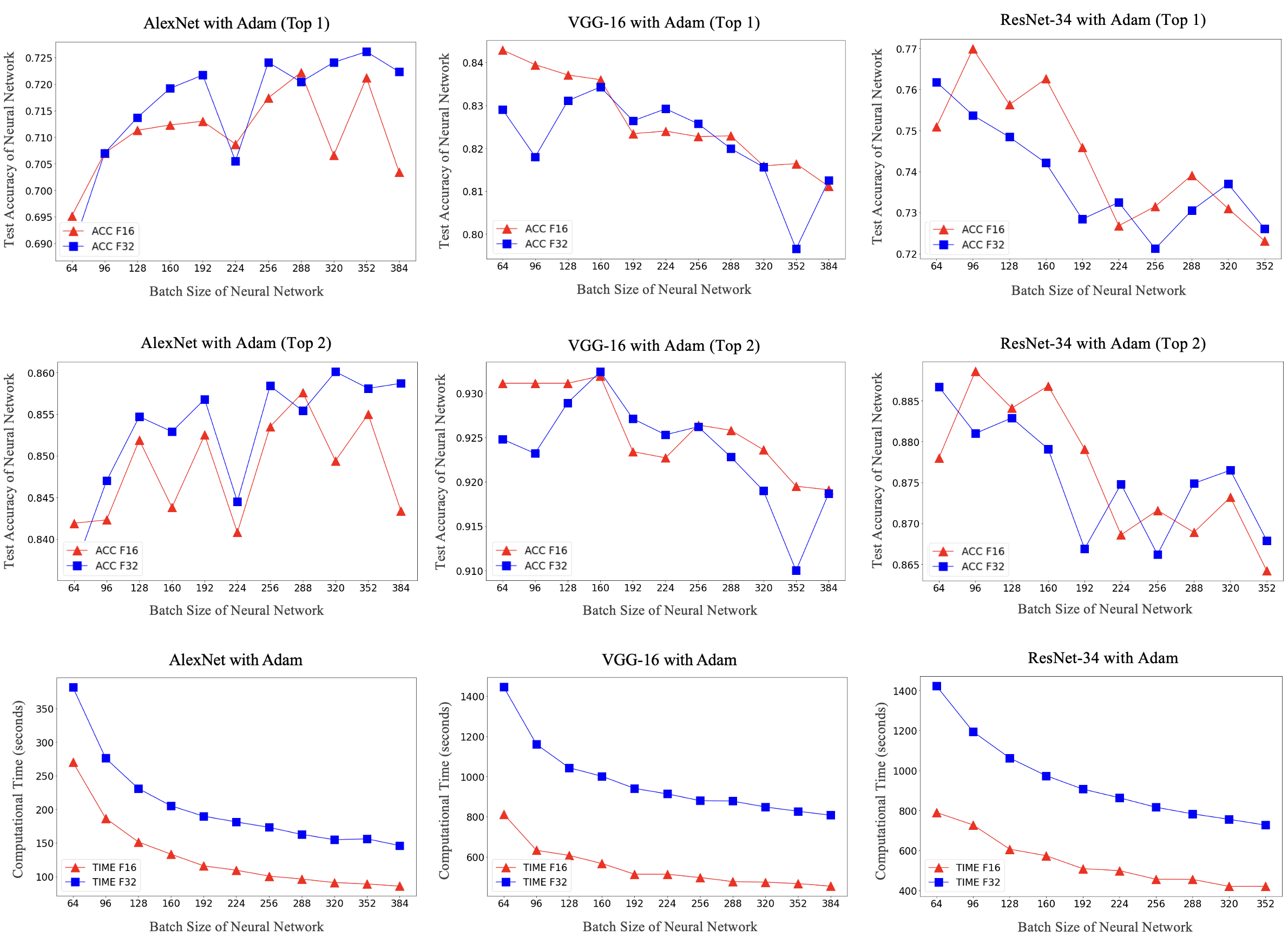}
	\label{fig:cnn}
	\vskip -0.15in
\end{figure*} 
\subsection{DNN Experiments}
In the DNN experiments, we train a DNN with three hidden layers
to perform the MNIST classification task and compare the performances between those of the 16-bit model and the 32-bit model. Each of the three layers in the DNN has 4096 neurons, whose outputs are fed to the next layer after passing through ReLU activation. Other works, such as~\cite{mixed-prec-training}, use a 32-bit softmax layer regardless of the overall precision settings to prevent potential numerical instability, but we leave stick with  a 16-bit softmax to see how a "pure" 16-bit model fares against 32-bit ones.
We also vary the types of optimizers among RMSProp, Adam, and SGD to examine their effects on performance. In addition, we confirm from this experiment that not only SGD but also other optimizers, such as RMSProp and Adam, can be used in 16-bit if $\epsilon$ is properly set.

In all experiments in this section, we use the learning rate of $10^{-3}$ and fix $\epsilon$ in RMSProp and Adam to $10^{-3}$ as well. We direct readers to the Appendix for experiments in other settings.
Finally, in addition to the 32-bit baseline, we also compare the mixed-precision training algorithm proposed by~\cite{mixed-prec-training} (as provided by TensorFlow).\\
Figure~\ref{fig:opt} shows that 16-bit deep neural networks are better than 32-bit and mixed precision in terms of computational time while maintaining similar test accuracy. In detail, (see Figure~\ref{fig:appendix1} in the Appendix) 16-bit SGD's computational time is decreased by 40.9\%, and the test accuracy was increased 0.6\%, respectively, compared to 32-bit while 16-bit computational time was decreased by 59.8\% and accuracy was increased 0.59\% compared to mixed precision when the batch size was the smallest at 64. Other batch sizes yielded consistent trends, albeit with smaller magnitude~\cite{batch}.

Optimization-wise, 16-bit RMSProp reduced the runtime by 40.4\% and 59.6\% compared to 32-bit and mixed, respectively, and accuracy is increased by 0.3\% and 0.4\%. 16-bit Adam improved the runtime by 41.6\% and 58.1\%, and 6.4\% and 7.5\% in terms of accuracy compared to 32-bit and mixed precision, respectively. Of the total of 33 experimental groups calculated with three optimizers, 31 decreased running time by more than 40\% compared to 32 bits while maintaining the accuracy to a similar level. Every computational time and test accuracy in 16-bit deep neural networks is better than those of 32-bit and mixed precision. 

\subsection{CNN Experiments}

16-bit CNN experiments were conducted to determine whether 16-bit is enough for training a more complex image classification problem and how numerically different it is from 32-bit. We used the CIFAR10 dataset and three convolutional neural networks (CNN) models: AlexNet, VGG16, and ResNet-34. All of these experiments were carried out using Adam in a 16-bit environment. 
Since the batch normalization (BN) layer is not implemented as an off-the-shelf module in 16 bits, the experiment was conducted focusing on CNN models that could be used without the batch normalization layer.  See the Appendix for treatment on 16-bit BN implementations.
\subsubsection{Training Time and Accuarcy}
Figure ~\ref{fig:opt} shows that 16-bit CNNs are better than 32-bit and mixed precision in terms of computational time and test accuracy. Figure ~\ref{fig:cnn} shows that 16-bit operations can also be applied to CNN models for image classification. We found that 16-bit CNNs maintain similar accuracy to 32-bit.

\colorbox{gray!20}{AlexNet} At the smallest batch size of 64, 16-bit AlexNet's top-1 and top-2 accuracy results are increased by 0.9\% and 0.6\% compared to 32-bit, respectively, and training time decreased by 29.1\%. The running time of 16-bit AlexNet is reduced by more than 29\% compared to its 32-bit counterpart. 

\colorbox{gray!20}{VGG-16} In terms of top-1 and top-2 accuracy, 16-bit VGG16 increased by 1.6\% and 0.6\% compared to 32-bit, respectively, and learning time decreased by 43.7\%. All computational speeds of 16-bit VGG16 were decreased by more than 40\% compared to 32-bit.

\colorbox{gray!20}{ResNet-34} 16-bit ResNet-34 decreased 0.6\% and 0.4\% in terms of top-1 and top-2 accuracy, and computational time decreased 44.7\% compared to 32-bit. The running time of 16-bit ResNet-34 were reduced by more than 39\%.\\
Overall, a 2.6\% (AlexNet, Batch Size: 384) decrease in 16-bit top1-accuracy compared to 32-bit in 33 experimental sets was the largest decrease, and the largest increase is 2.7\% (ResNet-34, Batch Size: 192). The smallest decrease in running time is by 29.1\% (AlexNet, Batch Size: 64) compared to 32 bits, and 45.6\% (VGG16, Batch Size: 288) the largest decrease. In other words, all 16-bit experimental groups used in all experiments decreased their computational speed by at least 29.1\%.

This experiment shows that by using 16-bit operations in image classification with CNN models, the running time (whether training or testing) can be greatly reduced while maintaining similar or higher accuracy compared to 32-bit operations. Our results demonstrate the efficiency of neural networks in training CNN models in a low-precision setting. 

\subsubsection{Model Size}
The preservation of the trained model weights is an important aspect of contemporary deep learning. 
\begin{table}[hbt!]
	\centering
	\footnotesize
	\setlength{\tabcolsep}{3pt}
	\renewcommand{\arraystretch}{1.5}
	\caption{Sizes of saved CNN models in 16- and 32-bit.}
	\vskip 0.15in
	\label{tab:capacity}
	\begin{tabular}{ccc}
		\toprule
		Model & FP16 & FP32 \\
		\midrule
		AlexNet & 41.99 MB & 83.94 MB \\
		VGG16 & 67.34 MB & 134.61 MB \\
		ResNet-34 & 171.99 MB & 343.87 MB\\
		\bottomrule
	\end{tabular}
\end{table}

It is useful for model training and storage if a model with similar accuracy has less storage size. Table~\ref{tab:capacity} shows the stored model has about half the size of the 32-bit size, faithfully reflecting the half-precision size reduction. 16-bit neural network allows for reducing the size of the model by half while maintaining similar accuracy. This opens up possibilities for 16-bit models to afford more complex and accurate architectures to attain better results.

\subsection{Limitation and Discussion}

This subsection reports the  limitations we find while using the 16-bit neural network.

\paragraph{Light hyperparameter-tuning.} 

During our experiments, we do not need to tune hyperparameters of  16-bit neural network training, e.g. learning rates,  except the epsilon in the settings of the optimizers. 
The SGD~\cite{sgd} optimizer does not have epsilon, so it can be used directly. For  the other optimizers we have tested in our experiments,  RMSProp~\cite{rmsprop} and Adam~\cite{adam}, we will need to change epsilon, whose default value (in Tensorflow), 1E-7, can easily trigger significant inaccuracy for 16-bit training. 

As a detail, the parameter epsilon corresponds to $\epsilon$ in the weight updates below:
\begin{align*}
	\text{RMSProp}: w_t =& w_{t-1} - \eta \frac{g_t}{\sqrt{v_t} + \epsilon} \\
	\text{ADAM}: w_t =& w_{t-1} - \eta \frac{\hat{m_t}}{\sqrt{\hat{v_t}} + \epsilon}
\end{align*} 
These optimizers introduced $\epsilon$  in to enhance numerical stability, but  $\epsilon$ = 1E-7 in the denominator causes floating-point overflow when $v_t$ in RMSProp or $\hat{v_t}$ in Adam are close to 0. As mentioned previously, we set $\epsilon$ = 1E-3 for 16-bit training. 

\paragraph{Missing 16-bit Batch normalization.} Tensorflow's current batch normalization layer does not directly support pure 16-bit operations.  The batch normalization layer presumably originates from  mixed precision works,  which would cast 16-bit input values to 32-bit for calculation and then down-cast to 16 bits. This type conversion only results in runtime overhead due to the intermediate 32-bit computation.

Since this work aims to report results on pure 16-bit neural networks, we have to implement a 16-bit batch normalization layer on our own. Our implementation of the 16bit batch normalization can be found in Appendix.

\paragraph{Batch size.} Our CNN experiments show that the accuracy of  16-bit neural networks decreases in larger batch sizes. This can be due to the precision loss incurred when averaging the cost over a larger number of samples in the mini-batches. This  is probably a minor limitation since neural networks in memory-constrained environments usually do not use large batch sizes.

Despite these limitations, we have confirmed that 16-bit neural networks learn as well as 32-bit with only  minor fine-tuning ($\epsilon$ in the optimizers). 
Thus, we believe ML practitioners can readily  benefit from 16NN when it comes to solving op optimization problems since it is faster, less memory-consuming, yet achieves similar accuracy as the 32-bit.
Instead of spending the same amount of time and space as 32-bit, we can form a network with an enhanced cost-to-benefit ratio. 

\section{Conclusion}
In this work, we have shown that for classification problems, the 16-bit floating-point neural network can have an accuracy close to the 32-bit. We have proposed a conjecture on their accuracy closeness and have validated it theoretically and empirically. Our experiments also suggest a small amount of accuracy gain, possibly due to the regularizing effect of
lowering the precision. These findings show that it is much safer and more efficient to use 16-bit precision
than what is commonly perceived: the runtime and memory consumption is lowered significantly with little or no
loss in accuracy.
In the future, we plan to expand our work to verify similar characteristics in other types of architectures
and problems such as generative models and regression.
\bibliography{main}

\begin{thebibliography}{27}
\providecommand{\natexlab}[1]{#1}
\providecommand{\url}[1]{\texttt{#1}}
\expandafter\ifx\csname urlstyle\endcsname\relax
  \providecommand{\doi}[1]{doi: #1}\else
  \providecommand{\doi}{doi: \begingroup \urlstyle{rm}\Url}\fi

\bibitem[Bjorck et~al.(2021)Bjorck, Chen, Sa, Gomes, and
  Weinberger]{low-prec-rl}
Bjorck, J., Chen, X., Sa, C.~D., Gomes, C.~P., and Weinberger, K.~Q.
\newblock Low-precision reinforcement learning: Running soft actor-critic in
  half precision.
\newblock In \emph{Proceedings of International Conference on Machine
  Learning}, 2021.

\bibitem[Chen et~al.(2017)Chen, Hu, Zhou, and Xu]{fxpnet}
Chen, X., Hu, X., Zhou, H., and Xu, N.
\newblock Fxp{N}et: Training a deep convolutional neural network in fixed-point
  representation.
\newblock In \emph{Proceedings of the International Joint Conference on Neural
  Networks}, 2017.

\bibitem[Courbariaux et~al.(2015)Courbariaux, Bengio, and David]{bin-connect}
Courbariaux, M., Bengio, Y., and David, J.-P.
\newblock Binary{C}onnect: Training deep neural networks with binary weights
  during propagations.
\newblock In \emph{Proceedings of Neural Information Processing Systems}, 2015.

\bibitem[Das et~al.(2018)Das, Mellempudi, Mudigere, Kalamkar, Avancha,
  Banerjee, Sridharan, Vaidyanathan, Kaul, Georganas, Heinecke, Dubey, Corbal,
  Shustrov, Dubtsov, Fomenko, and Pirogov]{mixed-prec-cnn}
Das, D., Mellempudi, N., Mudigere, D., Kalamkar, D., Avancha, S., Banerjee, K.,
  Sridharan, S., Vaidyanathan, K., Kaul, B., Georganas, E., Heinecke, A.,
  Dubey, P., Corbal, J., Shustrov, N., Dubtsov, R., Fomenko, E., and Pirogov,
  V.
\newblock Mixed precision training of convolutional neural networks using
  integer operations.
\newblock In \emph{Proceedings of International Conference on Learning
  Representations}, 2018.

\bibitem[De~Sa et~al.(2017)De~Sa, Feldman, Ré, and Olukotun]{low-prec-sgd}
De~Sa, C., Feldman, M., Ré, C., and Olukotun, K.
\newblock Understanding and optimizing asynchronous low-precision stochastic
  gradient descent.
\newblock In \emph{Proceedings of International Symposium on Computer
  Architecture}, 2017.

\bibitem[Dean(2020)]{9063049}
Dean, J.
\newblock The deep learning revolution and its implications for computer
  architecture and chip design.
\newblock In \emph{Proceedings of IEEE International Solid State Circuits
  Conference}, 2020.

\bibitem[Goodfellow et~al.(2016)Goodfellow, Bengio, and Courville]{sgd}
Goodfellow, I., Bengio, Y., and Courville, A.
\newblock \emph{Deep Learning}.
\newblock MIT Press, 2016.
\newblock \url{http://www.deeplearningbook.org}.

\bibitem[Gopinath et~al.(2019)Gopinath, Ghanathe, Seshadri, and Sharma]{kbsize}
Gopinath, S., Ghanathe, N., Seshadri, V., and Sharma, R.
\newblock Compiling {KB}-sized machine learning models to tiny {I}o{T} devices.
\newblock In \emph{Proceedings of Programming Language Design and
  Implementation}, 2019.

\bibitem[Gupta et~al.(2015)Gupta, Agrawal, Gopalakrishnan, and
  Narayanan]{dl-limprec}
Gupta, S., Agrawal, A., Gopalakrishnan, K., and Narayanan, P.
\newblock Deep learning with limited numerical precision.
\newblock In \emph{Proceedings of International Conference on Machine
  Learning}, 2015.

\bibitem[He et~al.(2016)He, Zhang, Ren, and Sun]{resnet}
He, K., Zhang, X., Ren, S., and Sun, J.
\newblock Identity mappings in deep residual networks.
\newblock In \emph{Proceedings of the European Conference on Computer Vision},
  2016.

\bibitem[Hinton et~al.()Hinton, Srivastava, and Swersky]{rmsprop}
Hinton, G., Srivastava, N., and Swersky, K.
\newblock Neural networks for machine learning, lecture 6e.
\newblock URL
  \url{http://www.cs.toronto.edu/~tijmen/csc321/slides/lecture_slides_lec6.pdf}.

\bibitem[{IEEE Computer Society}(2008)]{4610935}
{IEEE Computer Society}.
\newblock Ieee standard for floating-point arithmetic.
\newblock \emph{IEEE Std 754-2008}, pp.\  1--70, 2008.
\newblock \doi{10.1109/IEEESTD.2008.4610935}.

\bibitem[Jacob et~al.(2017)Jacob, Kligys, Chen, Zhu, Tang, Howard, Adam, and
  Kalenichenko]{quant-nn}
Jacob, B., Kligys, S., Chen, B., Zhu, M., Tang, M., Howard, A., Adam, H., and
  Kalenichenko, D.
\newblock Quantization and training of neural networks for efficient
  integer-arithmetic-only inference.
\newblock In \emph{Proceedings of the IEEE Conference on Computer Vision and
  Pattern Recognition}, 2017.

\bibitem[Kalamkar et~al.(2019)Kalamkar, Mudigere, Mellempudi, Das, Banerjee,
  Avancha, Vooturi, Jammalamadaka, Huang, Yuen, Yang, Park, Heinecke,
  Georganas, Srinivasan, Kundu, Smelyanskiy, Kaul, and
  Dubey]{https://doi.org/10.48550/arxiv.1905.12322}
Kalamkar, D., Mudigere, D., Mellempudi, N., Das, D., Banerjee, K., Avancha, S.,
  Vooturi, D.~T., Jammalamadaka, N., Huang, J., Yuen, H., Yang, J., Park, J.,
  Heinecke, A., Georganas, E., Srinivasan, S., Kundu, A., Smelyanskiy, M.,
  Kaul, B., and Dubey, P.
\newblock A study of {BFLOAT}16 for deep learning training, 2019.
\newblock URL \url{https://arxiv.org/abs/1905.12322}.

\bibitem[Keskar et~al.(2017)Keskar, Mudigere, Nocedal, Smelyanskiy, and
  Tang]{batch}
Keskar, N.~S., Mudigere, D., Nocedal, J., Smelyanskiy, M., and Tang, P. T.~P.
\newblock On large-batch training for deep learning: Generalization gap and
  sharp minima.
\newblock In \emph{Proceedings of International Conference on Learning
  Representations}, 2017.

\bibitem[Kingma \& Ba.(2015)Kingma and Ba.]{adam}
Kingma, D.~P. and Ba., J.~L.
\newblock Adam: A method for stochastic optimization.
\newblock In \emph{Proceedings of International Conference on Learning
  Representations}, 2015.

\bibitem[Krizhevsky et~al.(2012)Krizhevsky, Sutskever, and Hinton]{alexnet}
Krizhevsky, A., Sutskever, I., and Hinton, G.~E.
\newblock Image{N}et classification with deep convolutional neural networks.
\newblock In \emph{Proceedings of Neural Information Processing Systems}, 2012.

\bibitem[Kumar et~al.(2020)Kumar, Seshadri, and Sharma]{shiftry}
Kumar, A., Seshadri, V., and Sharma, R.
\newblock Shiftry: {RNN} inference in 2kb of {RAM}.
\newblock \emph{Proc. {ACM} Program. Lang.}, 4\penalty0 ({OOPSLA}):\penalty0
  182:1--182:30, 2020.

\bibitem[Köster et~al.(2017)Köster, Webb, Wang, Nassar, Bansal, Constable,
  Elibol, Gray, Hall, Hornof, Khosrowshahi, Kloss, Pai, and Rao]{flexpoint}
Köster, U., Webb, T.~J., Wang, X., Nassar, M., Bansal, A.~K., Constable,
  W.~H., Elibol, O.~H., Gray, S., Hall, S., Hornof, L., Khosrowshahi, A.,
  Kloss, C., Pai, R.~J., and Rao, N.
\newblock Flexpoint: An adaptive numerical format for efficient training of
  deep neural networks.
\newblock In \emph{Proceedings of Neural Information Processing Systems}, 2017.

\bibitem[Lin et~al.(2016)Lin, Talathi, and Annapureddy]{fp-cnn}
Lin, D.~D., Talathi, S.~S., and Annapureddy, V.~S.
\newblock Fixed point quantization of deep convolutional networks.
\newblock In \emph{Proceedings of the International Conference on Machine
  Learning}, 2016.

\bibitem[Micikevicius et~al.(2018)Micikevicius, Narang, Alben, Diamos, Elsen,
  Garcia, Ginsburg, Houston, Kuchaiev, Venkatesh, and Wu]{mixed-prec-training}
Micikevicius, P., Narang, S., Alben, J., Diamos, G., Elsen, E., Garcia, D.,
  Ginsburg, B., Houston, M., Kuchaiev, O., Venkatesh, G., and Wu, H.
\newblock Mixed precision training.
\newblock In \emph{Proceedings of International Conference on Learning
  Representations}, 2018.

\bibitem[Sharma et~al.(2017)Sharma, Park, Suda, Lai, Chau, Kim, Chandra, and
  Esmaeilzadeh]{bitfusion}
Sharma, H., Park, J., Suda, N., Lai, L., Chau, B., Kim, J.~K., Chandra, V., and
  Esmaeilzadeh, H.
\newblock Bit {F}usion: Bit-level dynamically composable architecture for
  accelerating deep neural networks.
\newblock In \emph{Proceedings of International Symposium on Computer
  Architecture}, 2017.

\bibitem[Simonyan \& Zisserman(2015)Simonyan and Zisserman]{vgg}
Simonyan, K. and Zisserman, A.
\newblock Very deep convolutional networks for large-scale image recognition.
\newblock In \emph{Proceedings of International Conference on Learning
  Representations}, 2015.

\bibitem[Skeel(1992)]{patriot}
Skeel, R.
\newblock Roundoff error and the patriot missile.
\newblock \emph{SIAM News}, \penalty0 (4), 1992.

\bibitem[Wang et~al.(2019)Wang, Chen, and Gopalakrishnan]{ulp}
Wang, N., Chen, C.-Y., and Gopalakrishnan, K.
\newblock Ultra-low-precision training of deep neural networks.
\newblock In \emph{Proceedings of Neural Information Processing Systems}, 2019.

\bibitem[Wu et~al.(2018)Wu, Li, Chen, and Shi]{int-dnn}
Wu, S., Li, G., Chen, F., and Shi, L.
\newblock Training and inference with integers in deep neural networks.
\newblock In \emph{Proceedings of International Conference on Learning
  Representations}, 2018.

\bibitem[Xiao et~al.(2022)Xiao, Lin, Seznec, Demouth, and Han]{smoothquant}
Xiao, G., Lin, J., Seznec, M., Demouth, J., and Han, S.
\newblock Smooth{Q}uant: Accurate and efficient post-training quantization for
  large language models, 2022.
\newblock URL \url{https://arxiv.org/abs/2211.10438}.

\end{thebibliography}
\bibliographystyle{icml2023}

\newpage
\appendix
\onecolumn
\section{Extra experiments}
\subsection{Top-1 and time results}
\begin{figure*}[hbt!]
    \centering
    \caption{\small MNIST classification top-1 accuracy and computational time}
    \vskip 0.15in
    \includegraphics[width=\columnwidth]{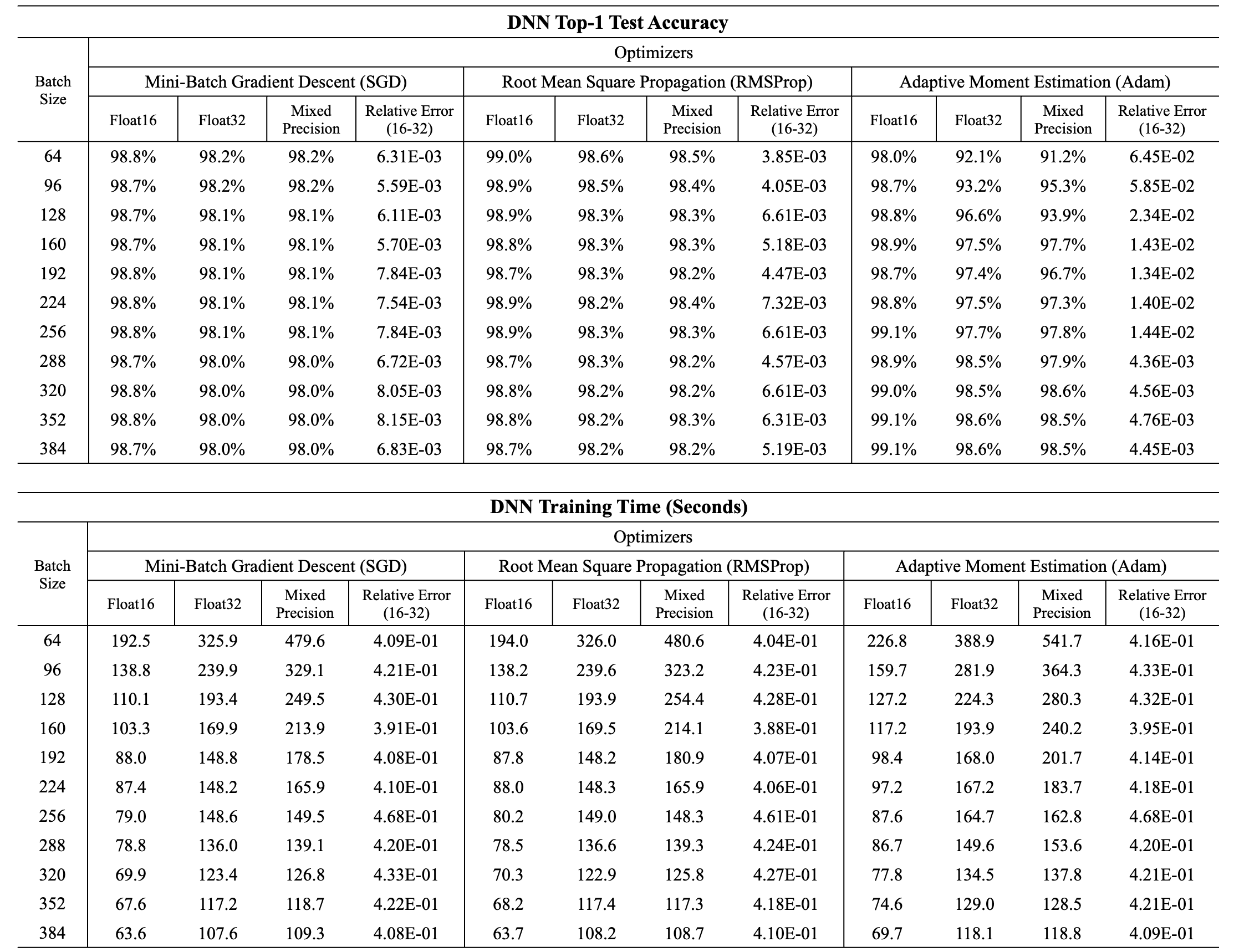}
    \label{fig:appendix1}
    \vskip -0.1in
\end{figure*} 

\begin{figure*}[hbt!]
    \centering
    \caption{\small CIFAR-10 classification top-1 and top-2 accuracy and computational time without BN Layers }
    \vskip 0.15in
    \includegraphics[width=\columnwidth]{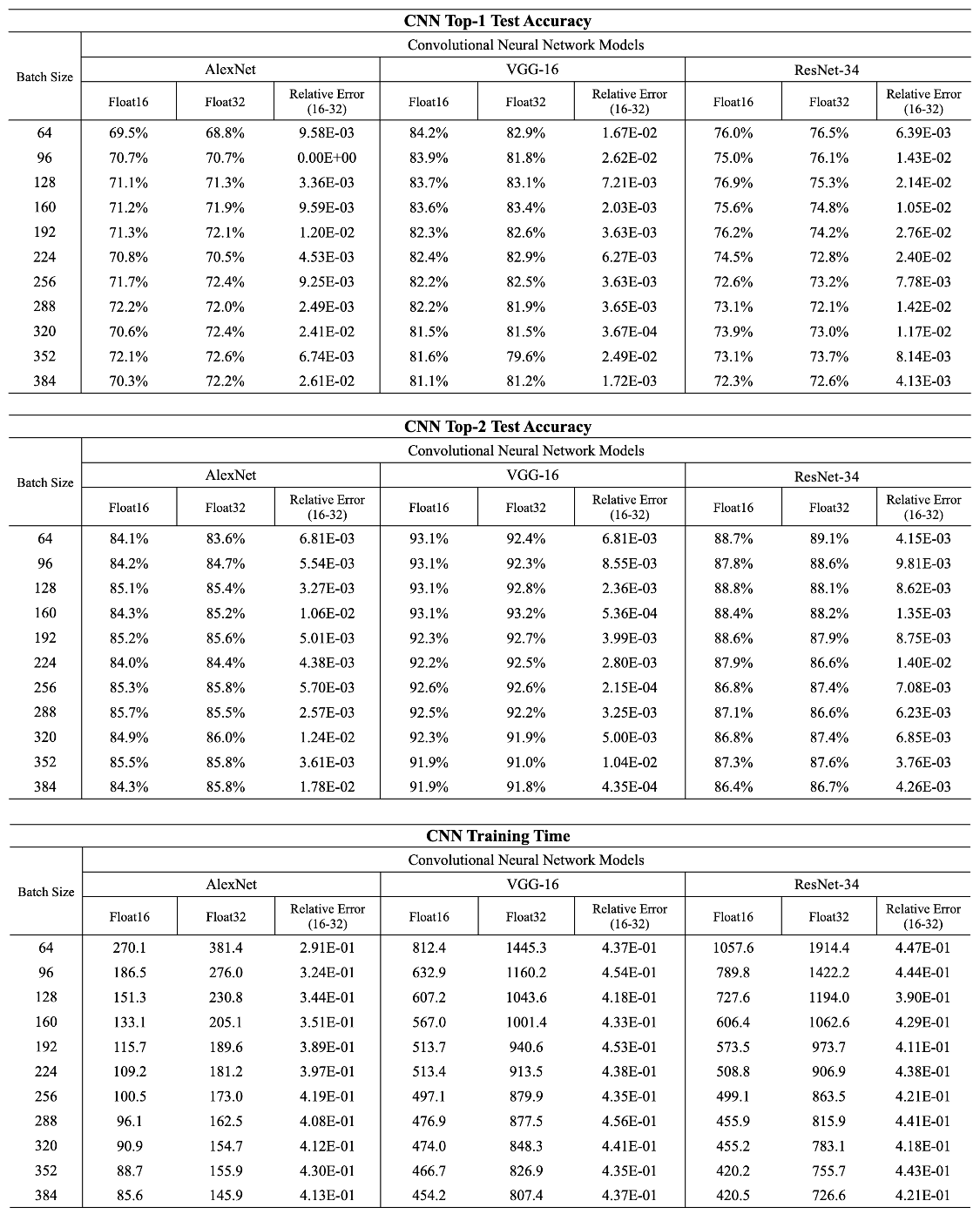}
    \label{fig:appendix2}
    \vskip -0.1in
\end{figure*} 
\subsection{Using 16-bit Batch Normalization}
As stated in the main text, the CNN results we provide in Section~\ref{expr}
are based on architectures without the batch normalization (BN) layers.
Figure~\ref{fig:appendix2} shows that 16-bit and 32-bit CNN models' performances without BN layers. The main reason for excluding those layers is that there are no off-the-shelf
16-bit implementations for BN layers. 

As a remedy, we implemented those manually and present the results.
We chose to move this result from the main text to here because we were not
able to fully verify that the implementation is truly a pure 16-bit one. 
That is, although we take every possible precaution to adhere to 16-bit
computation on the Python level, we do not know if any unprecedented
upcasting occurs in the library level (\textit{e.g.}, \texttt{libcuda} or
\texttt{cublas}). Hence, the results in Figure~\ref{fig:appendix3} are
gathered from 16-bit neural networks `to the best of our knowledge'.
\begin{figure*}[hbt!]
    \centering
    \vskip 0.15in
    \caption{\small Results from using 16-bit BN layers}
    \includegraphics[width=\columnwidth]{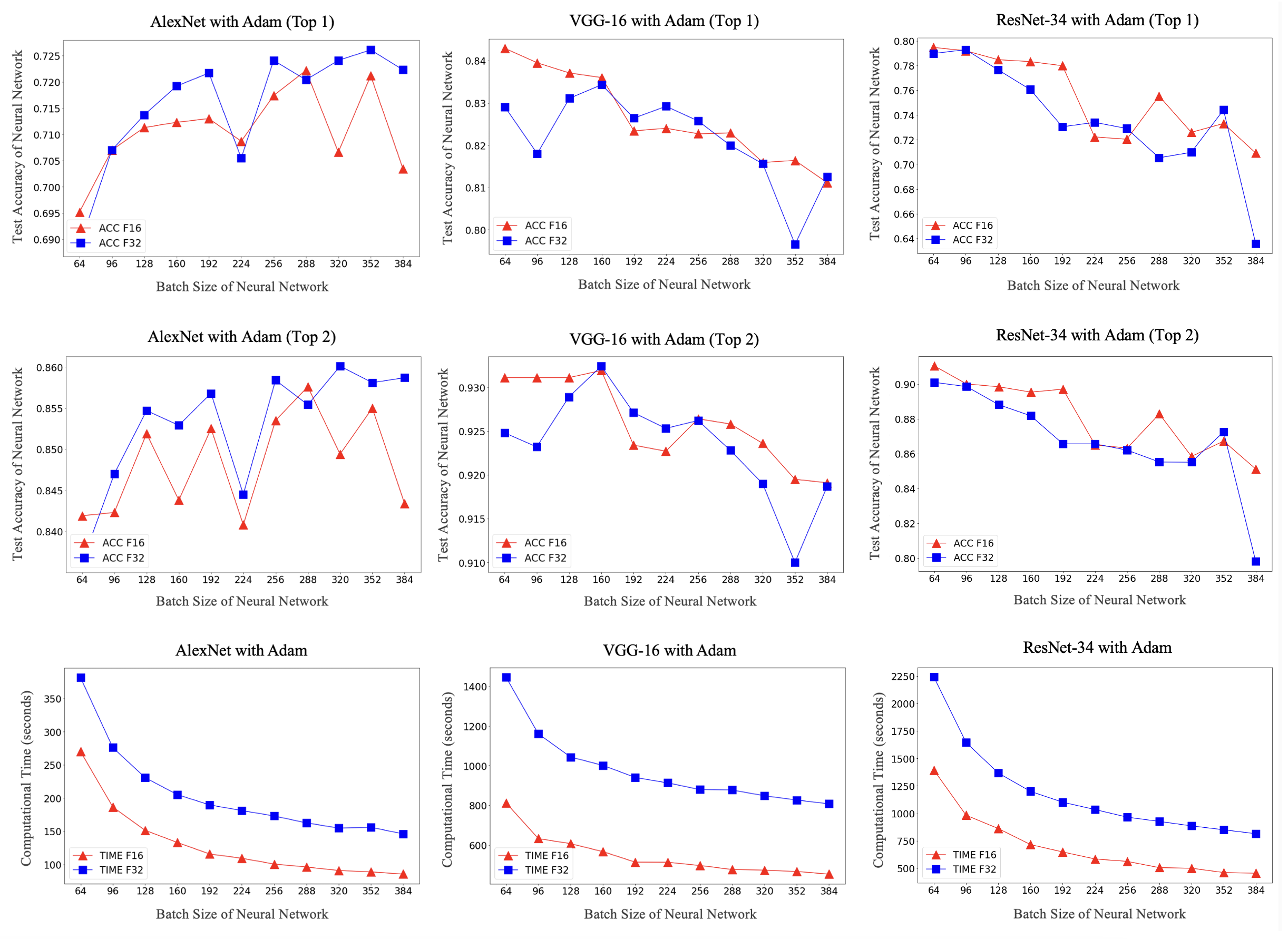}
    \label{fig:appendix3}
    \vskip -0.1in
\end{figure*}

\begin{figure*}[hbt!]
    \centering
    \caption{\small ResNet-34 with BN Layers CIFAR-10 classification top-1 and top-2 accuracy 
    and computational time}
    \vskip 0.15in
    \includegraphics[width=\columnwidth]{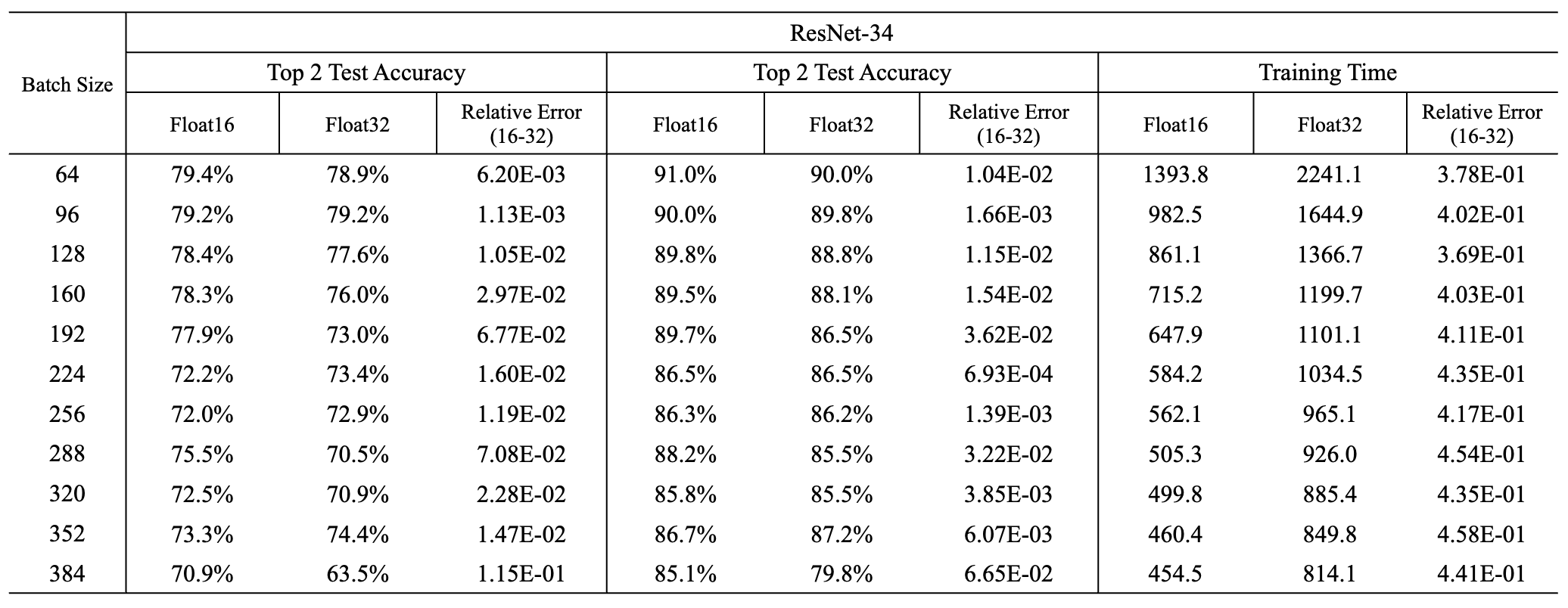}
    \label{fig:appendix4}
    \vskip -0.1in
\end{figure*} 

From the figures, we can see that incorporating 16-bit BN layers still 
results in a similar pattern as the one given in the main text. With the
exception of AlexNet, 16-bit networks give similar to superior performance
compared to the 32-bit models. 
\end{document}